\newcommand{\header}[1]{\multicolumn{1}{c}{#1}}
\newcommand{\ack}{
    \section*{Acknowledgements}
    
    We thank Professor Yang Li from Tsinghua Shenzhen International Graduate School as well as anonymous reviewers for their feedback. This work was supported by Tsinghua University SIGS Start-up Fund under Grant QD2022024C, Shenzhen Science and Technology Innovation Commision under Grant JCYJ20220530143002005 and Shenzhen Ubiquitous Data Enabling Key Lab under Grant ZDSYS20220527171406015.
}
\newcommand{\authorinfo}{
    \author{Menghao Waiyan William Zhu\orcidlink{0009-0001-4180-5791}, Pengcheng Hao, and Ercan Engin Kuruoğlu
    \thanks{Menghao Waiyan William Zhu is with Tsinghua Shenzhen International Graduate School, Shenzhen, China (e-mail: zhumh22@mails.tsinghua.edu.cn).}
    \thanks{Pengcheng Hao is with Tsinghua Shenzhen International Graduate School, Shenzhen, China (e-mail: pengchenghao@sz.tsinghua.edu.cn).}
    \thanks{Ercan Engin Kuruoğlu is with Tsinghua Shenzhen International Graduate School, Shenzhen, China (e-mail: kuruoglu@sz.tsinghua.edu.cn).}
}}
\newcommand{\repourl}{\href{https://github.com/blackblitz/bcl}{\texttt{https://github.com/blackblitz/bcl}}}
\newcommand{\bio}{}
\newtheorem{proposition}{Proposition}
\newtheorem{lemma}{Lemma}
\begin{document}

\title{
Sequential Function-Space Variational Inference via Gaussian Mixture Approximation
\thanks{This paper has been submitted to the IEEE Transactions on Artificial Intelligence for possible publication.}
}

\authorinfo

\markboth{}{}

\maketitle

\begin{abstract}
Continual learning in neural networks aims to learn new tasks without forgetting old tasks. Sequential function-space variational inference (SFSVI) uses a Gaussian variational distribution to approximate the distribution of the outputs of the neural network corresponding to a finite number of selected inducing points. Since the posterior distribution of a neural network is multi-modal, a Gaussian distribution could only match one mode of the posterior distribution, and a Gaussian mixture distribution could be used to better approximate the posterior distribution. We propose an SFSVI method based on a Gaussian mixture variational distribution. We also compare different types of variational inference methods with a fixed pre-trained feature extractor (where continual learning is performed on the final layer) and without a fixed pre-trained feature extractor (where continual learning is performed on all layers). We find that in terms of final average accuracy, likelihood-focused Gaussian mixture SFSVI outperforms other sequential variational inference methods, especially in the latter case.
\end{abstract}

\begin{IEEEImpStatement}
This paper presents work whose goal is to advance the field of continual learning. The ability to continually learn new knowledge without forgetting old knowledge has a great impact on society, especially in healthcare, where, for example, we would like to be able to learn new diseases without forgetting old diseases, or diagnosis and treatment is made as signs and symptoms progress over the course of time. Since continual learning methods inherently have some form of memory, it is important that they are designed to preserve the privacy of people.
\end{IEEEImpStatement}

\begin{IEEEkeywords}
Bayesian inference, class-incremental learning, continual learning, domain-incremental learning, function-space, neural network, variational inference
\end{IEEEkeywords}

\section{Introduction}

\IEEEPARstart{W}{hen} neural networks learn new tasks, they lose predictive performance on old tasks. This is known as catastrophic forgetting \parencite{mccloskey_catastrophic_1989}. It can be prevented by training using all the data, which we refer to as joint maximum a posteriori (MAP) training. However, access to the previous data may be limited due to computational or privacy constraints, so the aim of continual learning is to prevent forgetting with little or no access to the previous data.

For classification tasks, three types of continual learning settings are commonly studied \parencite{van_de_ven_three_2022}:
\begin{enumerate}
    \item\textbf{Task-incremental learning}, in which task IDs are provided and the classes change between tasks
    \item\textbf{Domain-incremental learning}, in which task IDs are not provided and the classes remain the same between tasks but the input data distribution changes between tasks
    \item\textbf{Class-incremental learning}, in which task IDs are not provided and the classes change between tasks
\end{enumerate}
In task-incremental learning, a multi-headed neural network is typically used, in which there is one head per task and the task ID is used to determine the head to use for prediction. In domain- and class-incremental learning, a single-headed neural network is typically used.

Task IDs make continual learning much easier by restricting the possible classes and are unlikely to be accessible in practice. We adhere to the desiderata proposed in \cite{farquhar_towards_2018} and restrict ourselves to domain- and class-incremental learning with single-headed neural networks on task sequences with more than two similar tasks and limited access to the previous data.

Continual learning methods are mostly based on regularization, replay or both. Regularization constrains the parameters of the neural network during training. Parameter-space regularization constrains them by directly penalizing their change \parencite{kirkpatrick_overcoming_2017,zenke_continual_2017,ritter_online_2018,nguyen_variational_2018}, while function-space regularization constrains them by penalizing the change in the outputs \parencite{li_learning_2016,pan_continual_2020,titsias_functional_2020,rudner_continual_2022}. Replay trains the neural network with the current data together with the previous data which may be stored or generated \parencite{chaudhry_continual_2019,rebuffi_icarl_2017}. It has been empirically observed that replay-based methods generally perform much better than regularization-based methods \parencite{mai_online_2022,farquhar_towards_2018,farquhar_unifying_2018}.

A class of continual learning method is based on sequential Bayesian inference on the parameters of the neural network, in which the previous posterior distribution becomes the current prior distribution, and prediction is provided by the posterior predictive distribution. Sequential maximum a posteriori inference (SMI) performs the prediction by using only the maximum of the posterior probability density function (PDF), while sequential variational inference (SVI) finds an approximate posterior PDF called the variational PDF by minimizing its Kullback-Leibler (KL) divergence from the posterior PDF and uses it to give an approximate posterior predictive distribution.

\cite{farquhar_unifying_2018} describes two general ways of performing SVI. In a prior-focused approach, the previous variational distribution is used as the current prior distribution, while in a likelihood-focused approach, the prior distribution is not updated, and the previous likelihood functions are approximated by using the previous data which may be stored or generated. Thus, prior-focused approaches are based on regularization, and likelihood-focused approaches are based on replay.

Neural network posteriors are known to be highly complex and often multi-modal \parencite{izmailov_what_2021}. To capture this structure, we employ a Gaussian mixture variational distribution, which provides greater flexibility than standard Gaussian approximations. This choice is further motivated by prior work showing that various heavy-tailed distributions, including \(\alpha\)-stable distributions, can be expressed as scale mixtures of Gaussians \parencite{kuruoglu_new_1997}, and have been used to model the distributional behavior of neural network parameters \parencite{li_exploring_2024}.

We propose a method for learning a Gaussian mixture variational distribution via a function-space approach. Our main goal is to investigate whether Gaussian mixture methods perform better than Gaussian methods. We also combine likelihood-focused and function-space approaches and compare prior-focused and likelihood-focused approaches. Finally, since a fixed pre-trained feature extractor can be used to reduce the high computational cost of function-space approaches, we also investigate the usefulness of a fixed pre-trained feature extractor for these methods.

The rest of the paper of organized as follows. Section \ref{sec:background} introduces the background for sequential Bayesian inference and sequential variational inference, and Section \ref{sec:relwork} mentions related work. Section \ref{sec:gmsfsvi} describes the proposed method, and Section \ref{sec:exp} describes the experiments. Finally, Section \ref{sec:conc} provides the conclusion. Appendix \ref{sec:expdetails} provides the experiment details.

\section{Background}
\label{sec:background}

We introduce sequential Bayesian inference and describe three SVI methods, which our proposed method builds upon. We use bold letters to denote random variables, and the difference between vectors and scalars should be clear from context.

\subsection{Sequential Bayesian Inference}

We describe a probabilistic model for Bayesian continual learning. Let \(\bm\theta\) be the collection of parameters of the neural network and \((\bm x_t)_{t=1}^T\) and \((\bm y_t)_{t=1}^T\) be the inputs and outputs from time \(1\) to \(t\), respectively. \((\bm x_t)_{t=1}^T\) are assumed to be independent, and \((\bm y_t)_{t=1}^T\) are assumed to be conditionally independent given \(\bm\theta\) and \((\bm x_t)_{t=1}^T\). These assumptions are depicted in \ref{fig:probmodel}.

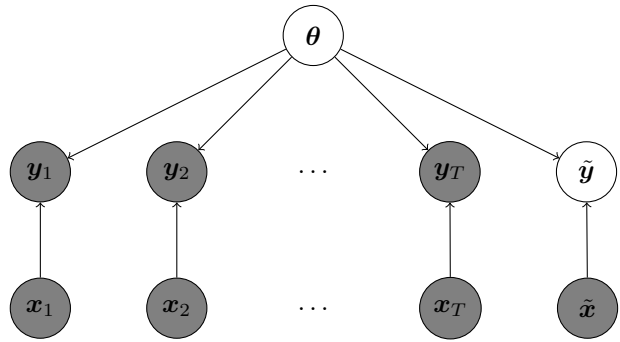
\begin{figure}[ht]
    \centering
    \centering
        \begin{tikzpicture}[inferrable/.style={circle, draw}, observable/.style={circle, draw, fill=gray}, minimum size=8mm, node distance=10mm]
            \node[inferrable] (theta) {\(\bm\theta\)};
            \node (ydots) [below=of theta] {\(\ldots\)};
            \node (xdots) [below=of ydots] {\(\ldots\)};
            \node[observable] (y2) [left=of ydots] {\(\bm y_2\)};
            \node[observable] (x2) [left=of xdots] {\(\bm x_2\)};
            \node[observable] (y1) [left=of y2] {\(\bm y_1\)};
            \node[observable] (x1) [left=of x2] {\(\bm x_1\)};
            \node[observable] (yt) [right=of ydots] {\(\bm y_T\)};
            \node[observable] (xt) [right=of xdots] {\(\bm x_T\)};
            \node[inferrable] (ypred) [right=of yt] {\(\tilde{\bm y}\)};
            \node[observable] (xpred) [right=of xt] {\(\tilde{\bm x}\)};
            \draw[->] (theta) -- (y1);
            \draw[->] (theta) -- (y2);
            \draw[->] (theta) -- (yt);
            \draw[->] (theta) -- (ypred);
            \draw[->] (x1) -- (y1);
            \draw[->] (x2) -- (y2);
            \draw[->] (xt) -- (yt);
            \draw[->] (xpred) -- (ypred);
        \end{tikzpicture}
    \caption{Bayesian network for continual learning. \(\bm\theta\) is the collection of parameters of the neural network. \((\bm x_t)_{t=1}^T\) are the inputs, and \((\bm y_t)_{t=1}^T\) are the outputs. \(\tilde{\bm x}\) and \(\tilde{\bm y}\) are the input and output for prediction, respectively. Shaded nodes are observed.}
    \label{fig:probmodel}
\end{figure}

Note that \((\bm x_t,\bm y_t)_{t=1}^T\) are not necessarily identically distributed, but tasks are similar, i.e. the form of the likelihood function \(l_t(y_t|\theta,x_t)\) is the same for all tasks. For example, in multi-class classification, it is the categorical likelihood function for all tasks. \(\bm\theta\) is assumed to be a continuous random variable of an appropriate dimension, so it has a PDF.

For \(t=1,2,\ldots,T\), let \(\bm D_t=(\bm x_\tau,\bm y_\tau)_{\tau=1}^t\) be all the training data observed until time \(t\). The posterior PDF of \(\bm\theta\) given the training data \(\bm D_t=D_t\) is provided by Bayes' rule:
\begin{equation}
    p_t(\theta|D_t)=\frac1{z_t}p_{t-1}(\theta|D_{t-1})l_t(y_t|\theta,x_t)\label{eq:bayes}
\end{equation}
where \(z_t=\int_\Theta p_{t-1}(\theta|D_{t-1})l_t(y_t|\theta,x_t)d\theta\) is a normalization constant.

The posterior predictive function of a testing output \(\tilde{\bm y}\) given a testing input \(\tilde{\bm x}=\tilde x\) and the training data \(\bm D_t=D_t\) is the expectation of the likelihood function with respect to the posterior distribution of \(\bm\theta\):
\begin{equation}
    \tilde p_t(\tilde y|\tilde x,D_t)=\int_\Theta p_t(\theta|D_t)l_t(\tilde y|\theta,\tilde x)d\theta=\mathbf E_{p_t(\cdot|D_t)}[l_t(\tilde y|\bm\theta,\tilde x)]
\end{equation}
In classification, the likelihood function is a categorical probability mass function (PMF) represented by the predictive class probabilities, which can be averaged across a posterior sample from \(p_t(\cdot|D_t)\) to give an approximate posterior predictive PMF. This is also known as Bayesian model averaging.

\subsection{Gaussian Variational Continual Learning}

Gaussian variational continual learning (G-VCL) \cite{nguyen_variational_2018} approximates the posterior PDF of \(\bm\theta\) at time \(t\) with a Gaussian variational distribution \(q_t\) by minimizing its KL divergence from the posterior PDF of \(\bm\theta\), which is equivalent to minimizing a loss function called the variational free energy (VFE) with respect to the parameter collection \(\phi\) of the variational PDF:
\begin{equation}
\mathfrak L_t(\phi)=\mathbf E_{q_t}[\mathfrak l_t(\bm\theta)]+D_{KL}\left(q_t\Vert q_{t-1}\right)\label{eq:pvfe}
\end{equation}
where \(\mathfrak l_t\) is the negative log likelihood function at time \(t\) and \(q_0\) is the initial prior PDF. The training method is known as Bayes by Backprop \cite{blundell_weight_2015}. The covariance matrix is assumed to be diagonal, with the \(j\)-th diagonal entry given by  \(\sigma_j^2=(\mathrm{softplus}(\rho_j))^2\) for some real number \(\rho_j\), which we refer to as the \(j\)-th modified standard deviation. Then, the parameter collection \(\phi=(\mu,\rho)\) includes the means and the modified standard deviations. Here, the standard deviation is modified so that we can perform unconstrained optimization.

To perform gradient descent on Equation \ref{eq:pvfe}, a "reparameterization trick" is used to compute the expectation, which is essentially rewriting the neural network parameter \(\bm\theta\) as a function of the variational parameters \(\phi\) and a standard random variable:
\begin{equation}
    \bm\theta=\mu+\mathrm{softplus}(\rho)\odot\bm z\label{eq:rt_g}
\end{equation}
where \(\odot\) is element-wise multiplication and \(\bm z\) is the standard Gaussian random variable with the same dimension as \(\bm\theta\). The expectation is approximated by taking a sample of \(\bm z\), applying Equation \ref{eq:rt_g} to obtain a sample of \(\bm\theta\) and computing the average of \(\mathfrak l_t(\bm\theta)\) across the sample. The KL divergence can be computed in closed form or approximated similarly. In mini-batch gradient descent, it is divided by the number of batches in the dataset, which is known as "KL reweighting".

There are two variants \parencite{farquhar_unifying_2018}:
\begin{enumerate}
    \item\textbf{Prior-focused}: The variational PDF of the current task is used as the prior PDF of the next task. The VFE is as in Equation \ref{eq:pvfe}.
    \item\textbf{Likelihood-focused}: The prior PDF is not updated, and a coreset, which is a small collection of stored previous data, is used to approximate the previous likelihood functions. The VFE becomes
        \begin{equation}
            \mathfrak L_t(\phi)=\mathbf E_{q_t}\left[\sum_{i=1}^t\mathfrak l_i(\bm\theta)\right]+D_{KL}(q_t\Vert q_0)\label{eq:lvfe}
        \end{equation}
\end{enumerate}

\subsection{Gaussian Mixture Variational Continual Learning}

Gaussian mixture variational continual learning (GM-VCL) \cite{phan_reducing_2022} uses a Gaussian mixture variational distribution to perform parameter-space variational inference. A Gaussian mixture distribution is a categorical mixture of Gaussian distributions, i.e., there is a categorical random variable \(\bm\kappa\) with \(k\) categories such that given \(\bm\kappa=\kappa\), \(\bm\theta\) is conditionally Gaussian. It has a categorical PMF with probabilities \((p_i)_{i=1}^k\), and we refer to \(\bm\kappa\) as the mixing categorical random variable. Thus, the variational distribution has PDF \(\sum_{\kappa=1}^kp_\kappa f_\kappa(\theta)\), where \(f_\kappa\) is the Gaussian PDF of the \(\kappa\)-th component.

In order to make Equation \ref{eq:pvfe} differentiable, the categorical distribution is approximated with a Gumbel-softmax distribution \cite{jang_categorical_2017,maddison_concrete_2017}, which is a continuous probability distribution over the probability simplex in \(\mathbb R^k\). The reparameterization trick for the Gumbel-softmax random variable \(\tilde{\bm\kappa}\) is
\begin{equation}
    \tilde{\bm\kappa}=\mathrm{softmax}\left(\frac1T(\lambda+\bm g)\right)\label{eq:rt_gm_k}
\end{equation}
where \(T\) is a parameter known as the temperature, \(\lambda\) is a real vector of un-normalized log probabilities of the categories and \(\bm g\) is a standard Gumbel random variable in \(\mathbb R^k\). \(\tilde{\bm\kappa}\) converges in distribution to \(\bm\kappa\) as \(T\to0\), so a small \(T\) can be used for the approximation. Then, the reparameterization trick for \(\bm\theta\) is
\begin{equation}
    \bm\theta=\sum_{i=1}^k\tilde{\bm\kappa}_i\bm\theta_i\label{eq:rt_gm}
\end{equation}
where \(\tilde{\bm\kappa}_i\) is the \(i\)-th component of \(\tilde{\bm\kappa}\) and \(\bm\theta_i\) is the \(i\)-th Gaussian component, i.e. \(\bm\theta\) given \(\bm\kappa=\kappa_i\).

The expectation in Equation \ref{eq:pvfe} can be approximated by using the reparameterization trick in \ref{eq:rt_gm}. The KL divergence for Gaussian mixture distributions does not have a closed form expression, so it can be approximated in the same way. Alternatively, it may be replaced with an upper bound which has a closed-form expression:
\begin{equation}
    \begin{split}
        \tilde D_{KL}(q_t\Vert q_{t-1})=&D_{KL}(q_{t,\bm\kappa}\Vert q_{t-1,\bm\kappa})\\&+\sum_{\kappa=1}^kp_{t,\kappa}D_{KL}(q_{t,\bm\theta_\kappa}\Vert q_{t-1,\bm\theta_\kappa})
    \end{split}\label{eq:klub}
\end{equation}
where \(q_{t,\bm\kappa}\) is the variational mixing categorical PMF at time \(t\), \(p_{t,\kappa}\) is the \(\kappa\)-th variational mixing categorical probability at time \(t\) and \(q_{t,\bm\theta_\kappa}\) is the variational PDF of the \(\kappa\)-th Gaussian component at time \(t\). In words, the first term is the KL divergence of the current mixing categorical PMF from the previous mixing categorical PMF and the second term is a weighted sum of the KL divergences of the current Gaussian PDFs from the previous Gaussian PDFs, weighted by the current mixing categorical probabilities. This upper bound was originally stated in \cite{singer_batch_1998}.

\subsection{Gaussian Sequential Function-Space Variational Inference}

Gaussian sequential function-space variational inference (G-SFSVI) \cite{rudner_continual_2022} aims to directly approximate the posterior predictive distribution of \(\bm f(x)=f(x;\bm\theta)\), where \(f\) is the neural network function before the final activation function. \(\bm f(x)\) can be viewed as a random process indexed by the input \(x\), i.e. for fixed \(x\), it is a random variable. We perform an affine approximation of \(f\) around \(\mu\):
\begin{equation}
\tilde f(x;\bm\theta)=f(x;\mu)+J_\mu(x)(\bm\theta-\mu)\label{eq:affine}
\end{equation}
where \(J_\mu(x)\) is the Jacobian matrix of \(f(x;\theta)\) with respect to \(\theta\) at \(\mu\). Since \(\bm\theta\) has a Gaussian variational PDF, \(\tilde{\bm f}(x)=\tilde f(x;\bm\theta)\) is a Gaussian process indexed by \(x\) with mean function \(\tilde\mu(x)=f(x;\mu)\) and covariance function \(\tilde\Sigma(x_1,x_2)=J_\mu(x_1)\Sigma(J_\mu(x_2))^T\), where \(\Sigma\) is the diagonal covariance matrix.

Then, the VFE involves a KL divergence of Gaussian processes, which is intractable. To make it tractable, a technical assumption called "prior conditional matching" is made \cite{rudner_rethinking_2021}, and it is replaced with a KL divergence of Gaussian PDFs of a finite number of inducing points:
\begin{equation}
\mathbf E_{q_t}[\mathfrak l_t(\bm\theta)]+D_{KL}(\tilde q_t\Vert\tilde q_{t-1})\label{eq:psfsvivfe}
\end{equation}
where \(\tilde q_t\) is the variational PDF at time \(t\) of the inducing outputs \(\tilde{\bm f_I}=(\tilde{\bm f}(x_j))_{j=1}^n\), where \((x_j)_{j=1}^n\) are the inducing inputs. The KL divergence can be computed in closed form, but in order to simplify the computation, diagonalization is made so that the covariances between different inducing points and between different output components are ignored. In particular, for all pairs of inducing points \((x_1,x_2)\), in the output covariance matrix \(\tilde\Sigma(x_1,x_2)=J_\mu(x_1)\Sigma(J_\mu(x_2))^T\), all the entries are set to zero if \(x_1\ne x_2\), and all the off-diagonal entries are set to zero if \(x_1=x_2\).

\section{Related Work}
\label{sec:relwork}

In Section \ref{sec:background}, we have already described three related works based on SVI, namely Gaussian variational continual learning (G-VCL) \parencite{nguyen_variational_2018}, Gaussian mixture variational continual learning (GM-VCL) \parencite{phan_reducing_2022} and Gaussian sequential function-space variational inference (G-SFSVI) \parencite{rudner_continual_2022}. There are also some other works that use the function-space approach. Functional regularization of the memorable past \parencite{pan_continual_2020} uses a Laplace approximation to approximate the output distribution. Functional regularization for continual learning \parencite{titsias_functional_2020} uses variational inference on the final layer and treats the earlier layers as deterministic parameters to be optimized.

Another approach to Bayesian continual learning is based on sequential MAP inference (SMI), typically using a quadratic approximation of the previous loss function as a regularization term, so these methods rely on approximating the Hessian matrix. Elastic weight consolidation (EWC) uses a diagonal approximation of the empirical Fisher information matrix (eFIM) \parencite{kirkpatrick_overcoming_2017}, adding a quadratic term to the objective for every task. \parencite{huszar_quadratic_2017,huszar_note_2018} propose a corrected objective with a single quadratic term for which the eFIM can be cumulatively added. Synaptic intelligence (SI) uses the change in loss during optimization to give a diagonal approximation \cite{zenke_continual_2017}. Online structured Laplace approximation uses Kronecker factorization to perform a block-diagonal approximation \cite{ritter_online_2018}. Experience replay (ER) is a simple method which uses a coreset for MAP training \parencite{chaudhry_continual_2019}.

Our proposed method performs function-space variational inference using a Gaussian mixture variational distribution. Unlike GM-VCL \parencite{phan_reducing_2022}, it performs function-space variational inference, and unlike G-SFSVI \parencite{rudner_continual_2022}, it uses a Gaussian mixture variational distribution. Moreover, we extend the idea of the likelihood-focused variational inference \parencite{farquhar_unifying_2018} to function-space variational inference.

\section{Gaussian Mixture Sequential Function-Space Variational Inference}
\label{sec:gmsfsvi}

We propose a method which performs sequential function-space variational inference with a Gaussian mixture variational PDF of \(\bm\theta\) with \(k\) components. Let \(\bm\kappa\) be the mixing categorical random variable with mixing categorical probabilities \((p_i)_{i=1}^k\). We perform a conditional affine approximation of \(f\) around \(\mu_\kappa\) given \(\bm\kappa=\kappa\), where \(\kappa=1,2,\ldots,k\):

\begin{equation}
\tilde f(x;\bm\theta)=f(x;\mu_\kappa)+J_{\mu_\kappa}(x)(\bm\theta-\mu_\kappa)\label{eq:condaffine}
\end{equation}

We will show that \(\tilde{\bm f}(x)=\tilde f(x;\bm\theta)\) in Equation \ref{eq:condaffine} is a mixture of Gaussian processes. First, we introduce a lemma.

\begin{lemma}
Let \(\bm\theta\) be an \(\mathbb R^n\)-valued Gaussian mixture random variable with \(k\) components, the \(\kappa\)-th component having mixing probability \(p_\kappa\), mean vector \(\mu_\kappa\) and covariance matrix \(\Sigma_\kappa\). Let \(\bm\kappa\) be the mixing categorical random variable. Consider a conditional affine transformation \(T_{\bm\kappa}:\mathbb R^n\to\mathbb R^d\) such that given \(\bm\kappa=\kappa\), \(T_\kappa(\bm\theta)=A_\kappa\bm\theta+b_\kappa\), where \(A_\kappa\) is a \(d\times n\) real matrix and \(b_\kappa\) is a vector in \(\mathbb R^d\). Then, \(T_{\bm\kappa}(\bm\theta)\) is an \(\mathbb R^d\)-valued Gaussian mixture random variable with \(k\) Gaussian components, the \(\kappa\)-th component having mixing probability \(p_\kappa\), mean vector \(A_\kappa\mu_\kappa+b_\kappa\),  covariance matrix \(A_\kappa\Sigma_\kappa A_\kappa^T\).\label{lem:condaffine}
\end{lemma}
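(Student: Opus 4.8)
The plan is to exploit the defining structure of a Gaussian mixture random variable: $\bm\theta$ is generated by first drawing the mixing categorical variable $\bm\kappa$ with $\mathbf P(\bm\kappa=\kappa)=p_\kappa$, and then, conditionally on $\bm\kappa=\kappa$, drawing $\bm\theta$ from the Gaussian law with mean $\mu_\kappa$ and covariance $\Sigma_\kappa$. Since the conditional affine map $T_{\bm\kappa}$ is a deterministic function of $\bm\kappa$ and $\bm\theta$, conditioning on $\{\bm\kappa=\kappa\}$ reduces the whole statement to the classical fact that an affine image of a Gaussian vector is again Gaussian, after which the law of total probability reassembles the mixture.

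First I would fix $\kappa\in\{1,\ldots,k\}$ and work conditionally on $\{\bm\kappa=\kappa\}$. On this event $T_{\bm\kappa}(\bm\theta)=A_\kappa\bm\theta+b_\kappa$ with $\bm\theta$ conditionally $\mathcal N(\mu_\kappa,\Sigma_\kappa)$. I would compute the conditional characteristic function: for $u\in\mathbb R^d$, $\mathbf E\!\left[e^{iu^T(A_\kappa\bm\theta+b_\kappa)}\mid\bm\kappa=\kappa\right]=e^{iu^Tb_\kappa}\,\mathbf E\!\left[e^{i(A_\kappa^Tu)^T\bm\theta}\mid\bm\kappa=\kappa\right]$, and then substitute the Gaussian characteristic function $\mathbf E\!\left[e^{iv^T\bm\theta}\mid\bm\kappa=\kappa\right]=\exp\!\left(iv^T\mu_\kappa-\tfrac12 v^T\Sigma_\kappa v\right)$ with $v=A_\kappa^Tu$. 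This yields $\exp\!\left(iu^T(A_\kappa\mu_\kappa+b_\kappa)-\tfrac12 u^T(A_\kappa\Sigma_\kappa A_\kappa^T)u\right)$, which is exactly the characteristic function of $\mathcal N(A_\kappa\mu_\kappa+b_\kappa,\,A_\kappa\Sigma_\kappa A_\kappa^T)$. By uniqueness of characteristic functions, the conditional law of $T_{\bm\kappa}(\bm\theta)$ given $\bm\kappa=\kappa$ is this Gaussian.

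Next, since $T_{\bm\kappa}$ leaves $\bm\kappa$ untouched, the pair $(\bm\kappa,T_{\bm\kappa}(\bm\theta))$ still has $\bm\kappa$ marginally categorical with probabilities $(p_\kappa)_{\kappa=1}^k$ and $T_{\bm\kappa}(\bm\theta)$ conditionally Gaussian given $\bm\kappa$. By the definition of a Gaussian mixture recalled above, this is precisely the assertion that $T_{\bm\kappa}(\bm\theta)$ is an $\mathbb R^d$-valued Gaussian mixture with $k$ components, the $\kappa$-th having mixing probability $p_\kappa$, mean $A_\kappa\mu_\kappa+b_\kappa$ and covariance $A_\kappa\Sigma_\kappa A_\kappa^T$; equivalently, marginalizing via the law of total probability gives the mixture law $\sum_{\kappa=1}^k p_\kappa\,\mathcal N(\cdot\,;A_\kappa\mu_\kappa+b_\kappa,\,A_\kappa\Sigma_\kappa A_\kappa^T)$.

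I do not expect a substantial obstacle; the only point needing care is that $A_\kappa$ is a general $d\times n$ matrix, so $A_\kappa\Sigma_\kappa A_\kappa^T$ may be singular and the transformed component may be a degenerate Gaussian with no density on $\mathbb R^d$. This is why I would argue through characteristic functions, which handle degenerate Gaussians uniformly, rather than through a change-of-variables formula for densities. Alternatively, one may write $\bm\theta\mid\bm\kappa=\kappa$ as $\mu_\kappa+\Sigma_\kappa^{1/2}\bm z$ for a standard Gaussian $\bm z$ and observe directly that $A_\kappa(\mu_\kappa+\Sigma_\kappa^{1/2}\bm z)+b_\kappa$ is an affine function of $\bm z$, hence Gaussian, with the stated first two moments.
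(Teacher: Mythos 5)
Your proof is correct and takes essentially the same route as the paper's: both condition on \(\bm\kappa\), use the characteristic function of an affine image of a Gaussian, and reassemble the mixture via iterated expectation / total probability. Your explicit remark that characteristic functions sidestep the possible singularity of \(A_\kappa\Sigma_\kappa A_\kappa^T\) is a point the paper leaves implicit, but it does not change the argument.
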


\begin{proof}
The characteristic function of \(\bm\theta\) is
\[
\begin{aligned}
\phi_{\bm\theta}(t)&=\bm E\left[e^{it^T{\bm\theta}}\right]\\
&=\mathbf E\left[\bm E\left[\left.e^{it^T{\bm\theta}}\right|\bm\kappa\right]\right]\\
&=\sum_{\kappa=1}^kp_{\kappa}e^{{it^T\mu_\kappa}-\frac12t^T\Sigma_\kappa t}\\
\end{aligned}
\]
The second equality follows from the law of iterated expectations. The third equality follows from the definition of expectation and the characteristic function of a Gaussian random variable.

The characteristic function of \(T_{\bm\kappa}(\bm\theta)\) is
\[
\begin{aligned}
\phi_{T_{\bm\kappa}(\bm\theta)}(t)&=\bm E\left[e^{it^T(T_{\bm\kappa}(\bm\theta))}\right]\\
&=\mathbf E\left[\mathbf E\left[\left.e^{it^T(T_{\bm\kappa}(\bm\theta))}\right|\bm\kappa\right]\right]\\
&=\mathbf E\left[\mathbf E\left[\left.e^{it^T(A_{\bm\kappa}\bm\theta+b_{\bm\kappa})}\right|\bm\kappa\right]\right]\\
&=\sum_{\kappa=1}^kp_\kappa e^{{it^T(A_\kappa\mu_\kappa+b_\kappa)}-\frac12t^TA_\kappa\Sigma_\kappa A_\kappa^Tt}\\
\end{aligned}
\]
The second equality follows from the law of iterated expectations. The third equality follows from the definition of \(T_{\bm\kappa}\). The fourth equality follows from the definition of the expectation and the characteristic function of an affine transformation of a Gaussian random variable.

Comparing with \(\phi_{\bm\theta}\), it is clear that \(\phi_{T_{\bm\kappa}(\bm\theta)}\) is the characteristic function of a Gaussian mixture random variable with the required parameters. Therefore, \(T_{\bm\kappa}(\bm\theta)\) is a Gaussian mixture random variable with the required parameters.
\end{proof}

Now, the following proposition states that \(\tilde{\bm f}(x)=\tilde f(x;\bm\theta)\) in Equation \ref{eq:condaffine} is a mixture of Gaussian processes.

\begin{proposition}
\(\tilde{\bm f}(x)=\tilde f(x;\bm\theta)\) defined in Equation \ref{eq:condaffine} is a mixture of Gaussian processes indexed by \(x\) with \(k\) components, the \(\kappa\)-th component having mixing probability \(p_\kappa\), mean function \(\tilde\mu_\kappa(x)=f(x;\mu_\kappa)\) and covariance function \(\tilde\Sigma_\kappa(x_1,x_2)=J_{\mu_\kappa}(x_1)\Sigma_\kappa(J_{\mu_\kappa}(x_2))^T\).\label{prop:mgp}
\end{proposition}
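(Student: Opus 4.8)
The plan is to reduce the statement to Lemma \ref{lem:condaffine}, applied not just to a single index but to arbitrary finite collections of inducing inputs. Recall that a mixture of Gaussian processes with mixing categorical random variable $\bm\kappa$ (with probabilities $(p_\kappa)_{\kappa=1}^k$) is characterised by the property that, conditional on $\bm\kappa=\kappa$, every finite-dimensional marginal is jointly Gaussian with mean and covariance obtained by evaluating the component mean and covariance functions, and crucially that the \emph{same} $\bm\kappa$ governs every such marginal. So the argument splits into two parts: first, identify each finite-dimensional marginal of $\tilde{\bm f}$ as a Gaussian mixture with the claimed parameters; second, observe that the mixing variable is global, so these marginals form a consistent family and genuinely arise from a single mixture-of-Gaussian-processes law.

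For the first part, I fix inducing inputs $x_1,\ldots,x_m$ and stack the conditional affine maps of Equation \ref{eq:condaffine}. Given $\bm\kappa=\kappa$, one has $\tilde f(x_i;\bm\theta)=J_{\mu_\kappa}(x_i)\bm\theta+\big(f(x_i;\mu_\kappa)-J_{\mu_\kappa}(x_i)\mu_\kappa\big)$, so defining the $md\times n$ matrix $A_\kappa$ by vertically concatenating $J_{\mu_\kappa}(x_1),\ldots,J_{\mu_\kappa}(x_m)$ and the vector $b_\kappa\in\mathbb R^{md}$ by concatenating $f(x_i;\mu_\kappa)-J_{\mu_\kappa}(x_i)\mu_\kappa$, the stacked vector $(\tilde f(x_1;\bm\theta),\ldots,\tilde f(x_m;\bm\theta))$ equals $A_{\bm\kappa}\bm\theta+b_{\bm\kappa}$, which is exactly the conditional affine transformation $T_{\bm\kappa}(\bm\theta)$ of Lemma \ref{lem:condaffine}. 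Applying the lemma, this stacked vector is an $\mathbb R^{md}$-valued Gaussian mixture with $k$ components, the $\kappa$-th having probability $p_\kappa$, mean $A_\kappa\mu_\kappa+b_\kappa$, and covariance $A_\kappa\Sigma_\kappa A_\kappa^T$. Reading off the block indexed by $i$, the mean is $J_{\mu_\kappa}(x_i)\mu_\kappa+f(x_i;\mu_\kappa)-J_{\mu_\kappa}(x_i)\mu_\kappa=f(x_i;\mu_\kappa)=\tilde\mu_\kappa(x_i)$, and the $(i,j)$ block of the covariance is $J_{\mu_\kappa}(x_i)\Sigma_\kappa(J_{\mu_\kappa}(x_j))^T=\tilde\Sigma_\kappa(x_i,x_j)$, matching the asserted mean and covariance functions.

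For the second part, I note that $\bm\kappa$ and $(p_\kappa)_{\kappa=1}^k$ above are simply those of $\bm\theta$ itself and do not depend on the choice of inducing inputs; conditioning on $\bm\kappa=\kappa$ makes $\bm\theta$ Gaussian and hence every stacked affine image Gaussian, consistently across nested collections of points. Thus the finite-dimensional distributions of $\tilde{\bm f}$ are precisely those of a process that, conditional on $\bm\kappa=\kappa$, is a Gaussian process with mean function $\tilde\mu_\kappa$ and covariance function $\tilde\Sigma_\kappa$ — i.e. a mixture of Gaussian processes with $k$ components and the stated parameters. I expect the only real subtlety to be presentational: making explicit that "mixture of Gaussian processes" must be read via a single global mixing variable shared by all finite-dimensional marginals, rather than merely asserting that each one-point marginal $\tilde{\bm f}(x)$ is a Gaussian mixture (which by itself would not pin down the joint law). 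Once the stacked-Jacobian construction is set up, the rest is a direct invocation of Lemma \ref{lem:condaffine} together with reading off block structure, with no analytic difficulty.
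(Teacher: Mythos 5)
Your proposal is correct and follows the same route as the paper: rewrite Equation \ref{eq:condaffine} conditionally on \(\bm\kappa=\kappa\) as the affine map \(J_{\mu_\kappa}(x)\bm\theta+(f(x;\mu_\kappa)-J_{\mu_\kappa}(x)\mu_\kappa)\) and invoke Lemma \ref{lem:condaffine}. The only difference is that you spell out the stacking of Jacobians over finite collections of inputs and the role of the single global mixing variable, details the paper's two-sentence proof leaves implicit; this is a welcome tightening rather than a different argument.
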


\begin{proof}
Given \(\bm\kappa=\kappa\), \(\tilde f(x;\bm\theta)\) can be rewritten as \(J_{\mu_\kappa}(x)\bm\theta+(f(x;\mu_\kappa)-J_{\mu_\kappa}(x)\mu_\kappa)\), so it is clear that \(\tilde f(x;\bm\theta)\) is a conditional affine transformation of \(\bm\theta\). By Lemma \ref{lem:condaffine}, it is a mixture of Gaussian processes with the required parameters.
\end{proof}

Similarly to G-SFSVI, the KL divergence of mixtures of Gaussian processes is replaced with a KL divergence of Gaussian mixture PDFs of a finite number of inducing points as in Equation \ref{eq:psfsvivfe}. The KL divergence cannot be computed in closed form. It can be either computed using Monte Carlo integration or replaced with an upper bound as in GM-VCL, leading to the following VFE:
\begin{equation}
    \begin{split}
        \mathfrak L_t(\phi)=&\mathbf E_t[\mathfrak l_t(\bm\theta)]+D_{KL}(q_{t,\bm\kappa}\Vert q_{t-1,\bm\kappa})\\&+\sum_{\kappa=1}^kp_{t,\kappa}D_{KL}(\tilde q_{t,\bm\theta_\kappa}\Vert\tilde q_{t-1,\bm\theta_\kappa})
    \end{split}\label{eq:pgmsfsvivfe}
\end{equation}
where \(\mathfrak l_t\) is the negative log likelihood with respect to the data at time \(t\), \(q_{t,\bm\kappa}\) is the variational mixing categorical PMF at time \(t\), \(p_{t,\kappa}\) is the \(\kappa\)-th variational mixing categorical probability at time \(t\) and \(\tilde q_{t,\bm\theta_\kappa}\) is the variational PDF of the \(\kappa\)-th Gaussian component at time \(t\) of the inducing outputs \(\tilde{\bm f_I}=(\tilde{\bm f}(x_j))_{j=1}^n\), where \((x_j)_{j=1}^n\) are the inducing inputs. Here, \(\phi=(\lambda,(\mu_\kappa)_{\kappa=1}^k,(\rho_\kappa)_{\kappa=1}^k)\), where \(\lambda\) is the collection of log un-normalized probabilities of the mixing categorical PMF, \((\mu_\kappa)_{\kappa=1}^k\) and \((\rho_\kappa)_{\kappa=1}^k\) are the means and modified standard deviations of the \(k\) Gaussian components.

By Proposition \ref{prop:mgp}, the \(\kappa\)-th component has mixing probability \(p_\kappa\), mean \(\tilde\mu_\kappa(x)=f(x;\mu_\kappa)\) for every inducing point \(x\) and covariance matrix \(\tilde\Sigma_\kappa(x_1,x_2)=J_{\mu_\kappa}(x_1)\Sigma_\kappa(J_{\mu_\kappa}(x_2))^T\) for every pair of inducing points \((x_1,x_2)\). As in G-SFSVI, diagonalization is made so that the covariances between different inducing points and between different output components are ignored. After the diagonalization, the \(\kappa\)-th Gaussian component of the outputs has a mean and a variance, each of dimension \(nd\), where \(n\) is the number of inducing points and \(d\) is the output dimension, i.e. the number of classes in classification tasks. The KL divergence of the \(\kappa\)-th Gaussian PDFs can be computed in closed form:
\begin{equation}
    \begin{split}
    &D_{KL}(\tilde q_{t,\bm\theta_\kappa}\Vert\tilde q_{t-1,\bm\theta_\kappa})\\=&\frac12\sum_{i=1}^{nd}\left(\ln\frac{\sigma_{t-1,\kappa,i}^2}{\sigma_{t,\kappa,i}^2}-1\right)\\&+\frac12\sum_{i=1}^{nd}\frac{\sigma_{t,\kappa,i}^2+(\mu_{t,\kappa,i}-\mu_{t-1,\kappa,i})^2}{\sigma_{t-1,\kappa,i}^2}
    \end{split}\label{eq:klgauss}
\end{equation}

The KL divergence of the mixing categorical PMFs can also be computed in closed form:
\begin{equation}
    D_{KL}(q_{t,\bm\kappa}\Vert q_{t-1,\bm\kappa})=\sum_{\kappa=1}^kp_{t,\kappa}\ln\frac{p_{t,\kappa}}{p_{t-1,\kappa}}\label{eq:klcat}
\end{equation}

The reparameterization trick for \(\bm\theta\) is as in Equation \ref{eq:rt_gm}, where the mixing categorical distribution is approximated with a Gumbel-softmax distribution in order to make Equation \ref{eq:pgmsfsvivfe} differentiable. When using mini-batch gradient descent to optimize the objective, the KL divergence is scaled by multiplying with the ratio of the current data batch size to the coreset batch size. Note that this is different from the "KL reweighting" in parameter-space approaches, where we divide the KL divergence by the number of batches.

The prior-focused and likelihood-focused variants described in \cite{farquhar_unifying_2018} can be extended to function-space methods as follows:
\begin{enumerate}
    \item\textbf{Prior-focused}: The variational posterior PDF of the current task is used as the prior PDF of the next task. The VFE is as in Equation \ref{eq:psfsvivfe}. For the first task, the inducing points are randomly generated, but starting from the second task, the inducing points are randomly sampled from the coreset. This is the approach used in \cite{rudner_continual_2022}.
    \item\textbf{Likelihood-focused}: The prior PDF is not updated, and a coreset, which is a small collection of stored previous data, is used to approximate the previous likelihood functions. The inducing points are randomly generated for all tasks. The VFE becomes
        \begin{equation}
            \mathfrak L_t(\phi)=\mathbf E_{q_t}\left[\sum_{i=1}^t\mathfrak l_i(\bm\theta)\right]+D_{KL}(\tilde q_t\Vert\tilde q_0)\label{eq:lsfsvivfe}
        \end{equation}
\end{enumerate}

Assuming that the initial prior PDF is standard Gaussian, the training for the two variants of our method consists of three stages per task:
\begin{enumerate}
    \item\textbf{Prior-focused GM-SFSVI}:
    \begin{enumerate}
        \item\textbf{Define the VFE}: The VFE in Equation \ref{eq:pgmsfsvivfe} is used. The initial collection of log un-normalized mixing probabilities \(\lambda\) is \(0\), the initial prior means \((\mu_\kappa)_{\kappa=1}^k\) are all \(0\) and the initial prior modified standard deviations \((\rho_\kappa)_{\kappa=1}^k\) are all such that \(\sigma_\kappa\) consists entirely of ones. Starting from the second task, the VFE is updated with the prior parameters.
        \item\textbf{Minimize the VFE}: Training is done on the VFE using mini-batch gradient descent. The mini-batch is used for computing the expectation, and inducing points are used for computing the KL divergence. For the first task, the inducing points are randomly generated, e.g. from a uniform probability distribution whose boundaries are known in advance or determined from the data. Starting from the second task, they are randomly sampled from the coreset. 
        \item\textbf{Update the coreset and prior parameters}: The coreset is updated by including a fixed number of data points that are randomly selected from the current task, and the prior parameters are updated with the current variational parameters.
    \end{enumerate}
    \item\textbf{Likelihood-focused GM-SFSVI}
    \begin{enumerate}
        \item\textbf{Define the VFE}: The VFE in Equation \ref{eq:lsfsvivfe} (with the KL divergence replaced with the upper bound in Equation \ref{eq:klub}) is used. The initial prior parameters are as in prior-focused GM-SFSVI. No update is necessary unless the KL weight changes between tasks.
        \item\textbf{Minimize the VFE}: Training is done on the VFE using mini-batch gradient descent. The mini-batch is concatenated with the coreset mini-batch, which are together used for computing the expectation, and inducing points, which are randomly generated as for the first task of prior-focused GM-SFSVI, are used for computing the KL divergence.
        \item\textbf{Update the coreset}: The coreset is updated by including a fixed number of data points that are randomly selected from the current task.
    \end{enumerate}
\end{enumerate}

\section{Experiments}
\label{sec:exp}

\begin{figure*}[ht]
    \begin{subfigure}{\textwidth}
        \includegraphics[width=\textwidth]{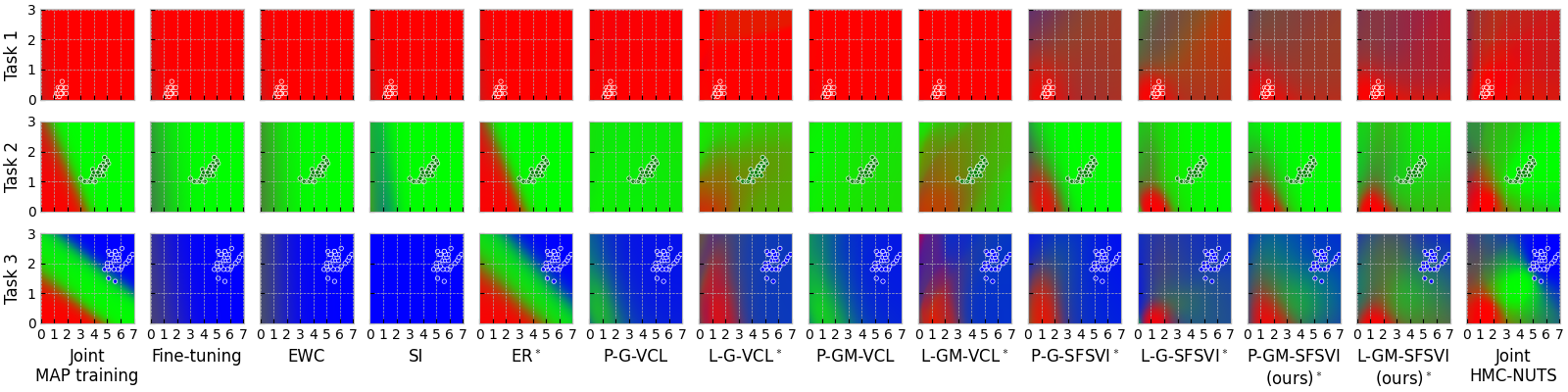}
        \caption{CI Split 2D Iris. The x-axis is the petal length (cm) and the y-axis is the petal width (cm). The pseudocolor plot shows the prediction probabilities mapped to the red, green and blue color channels, and the dots show the observed data. P-GM-SFSVI and L-GM-SFSVI could remember the second class (green) better than their Gaussian counterparts.}
        \centering
    \end{subfigure}
    \begin{subfigure}{\textwidth}
        \includegraphics[width=\textwidth]{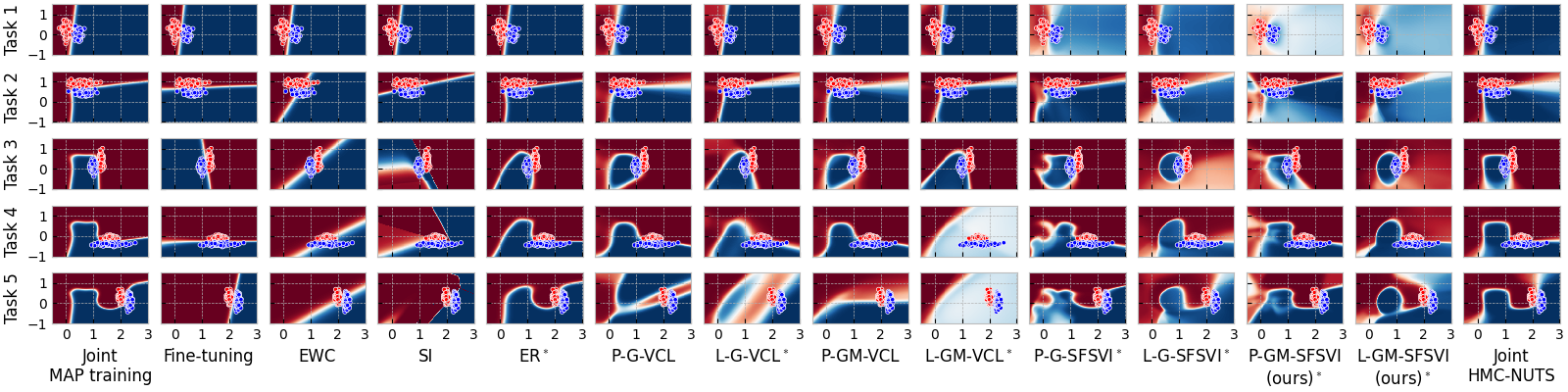}
        \caption{DI Sinusoid. The x-axis and y-axis are the two axes for the synthetic data. The pseudocolor plot shows the prediction probabilities with a red-blue colormap, and the dots show the observed data. The predictions of L-G-SFSVI and L-GM-SFSVI are the most similar to joint HMC-NUTS.}
        \centering
    \end{subfigure}
    \caption{Visualizations of prediction probabilities on DI Sinusoid and CI Split 2D Iris. All Gaussian mixture methods use 3 Gaussian components. \textsuperscript{*}A coreset with 16 data points per task is used.}
    \label{fig:viz}
\end{figure*}

\begin{table*}[ht]
    \caption{Testing final average accuracy for the methods with and without a pre-trained feature extractor. Gaussian mixture methods use 3 Gaussian components. \textsuperscript{*}A coreset with 256 data points per task is used.}
    \label{tab:faa}
    \begin{subtable}{\textwidth}
        \caption{With a fixed pre-trained feature extractor}
        \centering
        \begin{tabular}{lrrrrrrrrrr}
            \toprule
            \header{Method} & \header{CI Split MNIST} & \header{CI Split CIFAR-10} & \header{CI Split HAM-8} & \header{DI Split MNIST} & \header{DI Split CIFAR-8} & \header{DI Split HAM-6}\\
            \midrule
            Joint MAP training & 95.1077 & 76.2500 & 46.9048 & 93.4338 & 96.6250 & 71.9917\\
            Fine-tuning & 19.8382 & 19.0400 & 23.4756 & 64.4789 & 89.6250 & 63.5157\\
            \midrule
            EWC & 73.3744 & 31.9400 & 27.5986 & 82.9540 & 94.6625 & 68.9216\\
            SI & 30.2364 & 27.0500 & 25.7458 & 79.4063 & 95.0750 & 68.1758\\
            ER\textsuperscript{*} & 90.1657 & 62.5900 & \textbf{54.4761} & 92.0569 & 94.5500 & 77.0459\\
            \midrule
            P-G-VCL & 25.2223 & 18.9600 & 22.5610 & 79.8042 & 92.4625 & 64.6759\\
            L-G-VCL\textsuperscript{*} & 90.2739 & 64.3100 & 47.0426 & 91.9639 & 94.6125 & 76.2790\\
            P-GM-VCL & 27.6612 & 19.0000 & 22.8659 & 77.4632 & 92.8125 & 65.4968\\
            L-GM-VCL\textsuperscript{*} & 90.6559 & 64.6800 & 53.3457 & 92.0008 & 94.8125 & 77.5767\\
            P-G-SFSVI\textsuperscript{*} & 88.7514 & 52.6000 & 39.3425 & 91.1877 & \textbf{96.4000} & 72.1336\\
            L-G-SFSVI\textsuperscript{*} & 91.3509 & \textbf{66.1000} & 38.0762 & 91.6020 & 95.5125 & 75.5832\\
            P-GM-SFSVI (ours)\textsuperscript{*} & 89.2844 & 53.8200 & 49.3332 & 91.3151 & 96.3000 & 74.0918\\
            L-GM-SFSVI (ours)\textsuperscript{*} & \textbf{92.0564} & 65.8400 & 50.3536 & \textbf{92.1971} & 95.7375 & \textbf{78.2116}\\
            \bottomrule
        \end{tabular}
    \end{subtable}
    \newline
    \vspace*{0.2cm}
    \newline
    \begin{subtable}{\textwidth}
        \caption{Without a fixed pre-trained feature extractor}
        \centering
        \begin{tabular}{lrrrrrrrrr}
            \toprule
            \header{\multirow{2}{*}{Method}} & \multicolumn{2}{c}{CI Split MNIST} & \multicolumn{2}{c}{DI Split MNIST} & \header{CI Split CIFAR-10} & \header{DI Split CIFAR-8} & \header{CI Split HAM-8} & \header{DI Split HAM-6}\\
            & \header{FCNN} & \header{CNN} & \header{FCNN} & \header{CNN} & \header{CNN} & \header{CNN} & \header{CNN} & \header{CNN}\\
            \midrule
            Joint MAP training & 97.9428 & 99.0519 & 98.6771 & 99.4347 & 76.1400 & 95.9750 & 45.1051 & 75.5069\\
            Fine-tuning & 19.8891 & 19.9496 & 54.9867 & 61.9545 & 19.2500 & 87.1625 & 24.0854 & 65.0410\\
            \midrule
            EWC & 21.5787 & 19.9697 & 68.4412 & 72.9213 & 19.8300 & 89.2625 & 24.0991 & 68.0351\\
            SI & 20.7597 & 20.3879 & 60.2053 & 72.4020 & 19.8900 & 89.0250 & 23.7943 & 65.7724\\
            ER\textsuperscript{*} & 90.5011 & 96.6936 & 94.8631 & 98.3537 & 43.8900 & 90.7375 & \textbf{49.6599} & 74.1218\\
            \midrule
            P-G-VCL & 10.1765 & 10.1664 & 60.9892 & 71.1924 & 10.0000 & 88.6750 & 13.4146 & 63.8091\\
            L-G-VCL\textsuperscript{*} & 43.5818 & 80.8643 & 94.2151 & 97.5226 & 20.3400 & 89.8000 & 13.4146 & 63.8091\\
            P-GM-VCL & 10.1765 & 9.8235 & 59.2359 & 71.7897 & 10.0000 & 88.1125 & 14.0244 & 63.8091\\
            L-GM-VCL\textsuperscript{*} & 44.2003 & 79.8097 & 94.7356 & 97.2071 & 10.0000 & 90.2875 & 21.3085 & 63.8091\\
            P-G-SFSVI\textsuperscript{*} & 41.0995 & 29.8362 & 80.8738 & 94.5080 & 18.7300 & 88.9875 & 17.0328 & 63.8091\\
            L-G-SFSVI\textsuperscript{*} & \textbf{92.3369} & 96.9623 & 95.9703 & 98.5243 & 45.4800 & 91.4250 & 37.9737 & 75.7759\\
            P-GM-SFSVI (ours)\textsuperscript{*} & 40.3262 & 28.8239 & 79.3560 & 83.1516 & 18.5500 & 89.3125 & 16.9440 & 63.8091\\
            L-GM-SFSVI (ours)\textsuperscript{*} & 92.2116 & \textbf{97.2589} & \textbf{96.3143} & \textbf{98.3821} & \textbf{45.4900} & \textbf{91.9125} & 44.0456 & \textbf{78.7513}\\
            \bottomrule
        \end{tabular}
    \end{subtable}
\end{table*}

We first perform experiments on small task sequences to visualize the prediction probabilities. Then, we perform experiments on image task sequences and compare the final average accuracy of different methods. Each task sequence has training, validation and testing dataset sequences. Data used, methods compared and results obtained are described below. More details can be found in Appendix \ref{sec:expdetails}. Documented and reproducible code is available under an MIT Licence at \repourl.

\subsection{Data}

CI indicates class-incremental learning, and DI indicates domain-incremental learning.

The task sequences used for the visualizations are:
\begin{itemize}
    \item \textbf{CI Split 2D Iris}: A task sequence for flower species classification based on Iris and consists of 3 tasks, each with 1 class.
    \item \textbf{DI Sinusoid}: A synthetic task sequence used in \cite{rudner_continual_2022} and consists of 5 tasks, each with 2 classes.
\end{itemize}

The image task sequences used are:
\begin{itemize}
    \item \textbf{CI Split MNIST} and \textbf{DI Split MNIST}: A task sequence for handwritten digit classification based on MNIST and consists of 5 tasks, each with 2 classes. In DI Split MNIST, the classes are set to even and odd. The pre-training task for them is \textbf{EMNIST Letters}, which has no classes in common with them.
    \item \textbf{CI Split CIFAR-10} and \textbf{DI Split CIFAR-8}: A task sequence for natural image classification based on CIFAR-10 and consists of 5 and 4 tasks respectively, each with 2 classes. For DI Split CIFAR-8, 2 animal classes "bird" and "frog" are removed from CIFAR-10, and the classes are set to "vehicle" and "animal". The pre-training task for them is \textbf{CIFAR-100}, which has no classes in common with them.
    \item \textbf{CI Split HAM-8} and \textbf{DI Split HAM-6}: A task sequence for skin condition classification based on HAM10000, which we rename to HAM-8 based on the number of classes, and consists of 4 and 3 tasks respectively, each with 2 classes. For domain-incremental learning, 1 benign class "vascular lesion" and 1 indeterminate class "actinic keratosis" are removed from HAM-8, and the classes are set to "benign" and "malignant". The pre-training task for them is BCN20000, which we rename to \textbf{BCN-12} based on the number of classes, and has classes in common with HAM-8, but they are skin images from different populations.
\end{itemize}

\subsection{Methods}

Joint MAP training and fine-tuning serve as the best and worst reference methods, respectively. SVI method names have 3 parts indicating respectively prior-focused (P) or likelihood-focused (L), Gaussian (G) or Gaussian mixture (GM) and parameter-space (VCL) or function-space (SFSVI). We also compare with SMI methods EWC with Huszár's corrected penalty \parencite{huszar_quadratic_2017,huszar_note_2018}, SI \parencite{zenke_continual_2017} and ER \parencite{chaudhry_continual_2019}. In visualizations, we also use as a reference method joint Hamiltonian Monte Carlo - No U Turn Sampler (HMC-NUTS), which is regarded as the gold standard Bayesian method. However, since it is not scalable, it is not used in image task sequences.

First, experiments are performed with a fixed pre-trained feature extractor. A neural network is pre-trained on a similar task, and all the layers except the final layer are fixed. Then, continual learning is performed on the final layer. This significantly reduces the computational burden, especially for SFSVI methods. Moreover, since the coreset consists of only features, they become privacy-preserving as well. Then, experiments are performed without a fixed pre-trained feature extractor, i.e. continual learning is performed on the whole neural network. Hyperparameter tuning is done for EWC and SI, and the testing final average accuracy is evaluated for all the methods. The testing final average accuracy is the average of the accuracies on all the testing datasets of a task sequence.

\subsection{Results}

We are interested in comparing different sequential variational inference methods in terms of final average accuracy, specifically prior-focused (P) vs likelihood-focused (L) methods, Gaussian (G) vs Gaussian mixture (GM) methods and parameter-space (VCL) vs function-space (SFSVI) methods.

The visualizations of prediction probabilities for all the methods are shown in Figure \ref{fig:viz}. Joint MAP training serves as the reference for all SMI methods, which use MAP prediction, while Joint HMC-NUTS serves as the reference for all SVI methods, which use Bayesian model averaging for prediction. Among SMI methods, ER remains a very competitive method. Among SVI methods, for CI Split 2D Iris, P-GM-SFSVI and L-GM-SFSVI methods give predictions most similar to joint HMC-NUTS, while for DI Sinusoid, L-G-SFSVI and L-GM-SFSVI give predictions most similar to joint HMC-NUTS.

The testing final average accuracy for all the experiments on image task sequences are shown in Table \ref{tab:faa}. As before, we note that ER remains a strong competitor among SMI methods. Although \cite{rudner_continual_2022} reports a high final average accuracy for P-G-SFSVI with an FCNN on CI Split MNIST, we were not able to reproduce similar results. Instead, we find its likelihood-focused counterpart L-G-SFSVI easier to achieve a high final average accuracy. On most of the task sequences, especially without a pre-trained feature extractor, L-GM-SFSVI achieves the highest final average accuracy.

\section{Conclusion}
\label{sec:conc}

We proposed a method that performs sequential function-space variational inference using a Gaussian mixture variational distribution with two variants: prior-focused  and likelihood-focused. We compared them with other types of sequential variational inference methods on class- and domain-incremental task sequences with or without a fixed pre-trained feature extractor and showed empirically that the likelihood-focused Gaussian mixture sequential function-space variational inference achieves the best final average accuracy. Our experiments also provide evidence that in general, likelihood-focused methods perform significantly better than their prior-focused counterparts. We hope that our work revealed more insights into sequential variational inference methods.

\appendices

\section{Experiment Details}
\label{sec:expdetails}

\begin{table*}[ht]
    \caption{Base learning rate, batch size and number of epochs for all tasks. For methods which use a coreset, the coreset batch size is equal to the batch size. The base learning rate for SFSVI methods is 0.1 times the stated rate. The batch size for likelihood-focused methods is half the stated size as the current batch and coreset batch are concatenated in these methods.}
    \label{tab:hp}
        \centering
        \begin{tabular}{lrrr}
            \toprule
            \header{Method} & \header{Base learning rate} & \header{Batch size} & \header{Number of epochs}\\
            \midrule
            \textbf{Visualization experiments}\\
            Methods in CI Split 2D Iris & 0.1 & 16 & 100\\
            Methods in DI Sinusoid & 0.1 & 16 & 100\\
            \midrule
            \textbf{Evaluation experiments with pre-training}\\
            Pre-training on EMNIST Letters & 0.01 & 64 & 20\\
            Pre-training on CIFAR-100 & 0.001 & 64 & 20\\
            Pre-training on BCN-12 & 0.0001 & 64 & 20\\
            Methods in CI Split MNIST & 0.01 & 64 & 20\\
            Methods in CI Split CIFAR-10 & 0.01 & 64 & 20\\
            Methods in CI Split HAM-8 & 0.01 & 64 & 20\\
            Methods in DI Split MNIST & 0.01 & 64 & 20\\
            Methods in DI Split CIFAR-8 & 0.01 & 64 & 20\\
            Methods in DI Split HAM-6 & 0.01 & 64 & 20\\
            \midrule
            \textbf{Evaluation experiments without pre-training}\\
            Methods in CI Split MNIST with FCNN & 0.01 & 64 & 20\\
            Methods in CI Split MNIST with CNN & 0.01 & 32 & 20\\
            Methods in DI Split MNIST with FCNN & 0.01 & 64 & 20\\
            Methods in DI Split MNIST with CNN & 0.01 & 32 & 20\\
            Methods in CI Split CIFAR-10 with CNN & 0.01 & 32 & 20\\
            Methods in DI Split CIFAR-8 with CNN & 0.01 & 32 & 20\\
            Methods in CI Split HAM-8 with CNN & 0.01 & 32 & 20\\
            Methods in DI Split HAM-6 with CNN & 0.01 & 32 & 20\\
            \bottomrule
        \end{tabular}
\end{table*}

\subsection{Data Preparation}

For DI Sinusoid, which has 5 tasks, each with 2 classes, training, validation and testing dataset sequences are prepared by generating 100 points from a bivariate Gaussian distribution with a diagonal covariance matrix for each class and task. For CI Split 2D Iris, Iris with two features "petal length" and "petal width" is split into training and testing datasets with 20\% testing size, and then the training dataset into training and validation datasets with 20\% validation size, so the training, validation and testing proportions are 64\%, 16\% and 20\%, respectively. Finally, each dataset is split by class into a dataset sequence.

For EMNIST Letters, CIFAR-100 and task sequences based on MNIST and CIFAR-10, training and testing datasets are available from PyTorch, so the training dataset is split into training and validation datasets with 20\% validation size. Each dataset is then split by class into a dataset sequence.

For BCN-12 and HAM-8, the \(640\times450\) images are resized to \(32\times32\) with Lanczos interpolation. For all image data, the pixel values are divided by 255 so that they take values between 0 and 1. Data augmentation (e.g. flipping and cropping) is not performed.

\subsection{Neural Network Architectures}

The fully connected neural network used for DI Sinusoid and CI Split 2D Iris has 2 hidden layers each of 16 nodes. All the hidden nodes use swish activation.

The pre-trained neural network for both CI Split MNIST and DI Split MNIST has 2 convolutional layers and 2 dense layers, totaling 4 layers. Each convolutional layer has 32 \(3\times3\) filters and is followed by group normalization with 32 groups, swish activation and average pooling with a size of \(2\times2\). The hidden dense layer has 64 nodes with swish activation. Thus, the feature dimension is 64.

The pre-trained neural network for CI Split CIFAR-10, CI Split HAM-8, DI Split CIFAR-8 and DI Split HAM-6 has 17 convolutional layers and 1 dense layer, totaling 18 layers. Each convolutional layer is followed by group normalization with 32 groups and swish activation. The 2nd to the 17th convolutional layers are arranged into 8 residual blocks, each with 2 convolutional layers, and every 2 residual blocks are followed by average pooling with a size of \(2\times2\). The numbers of filters for the 17 convolutional layers are 32, 64, 64, 64, 64, 128, 128, 128, 128, 256, 256, 256, 256, 512, 512, 512 and 512, respectively, and the filter sizes are all \(3\times3\). Thus, the feature dimension is 512.

For the experiments on CI Split MNIST and DI Split MNIST without a fixed pre-trained feature extractor, the FCNN consists of 2 hidden layers each of 256 nodes, and the CNN is the same as in the pre-trained neural network except that the hidden dense layer has 32 nodes instead of 64 nodes. For the experiments on CI Split CIFAR-10, DI Split CIFAR-8, CI Split HAM-8 and DI Split HAM-6, the CNN has 4 convolutional layers and 2 dense layers, totaling 6 layers. The numbers of filters for the convolutional layers are 32, 32, 64 and 64, and the hidden dense layer has 64 nodes. All the hidden nodes use swish activation.

\subsection{Training}

In all experiments, the prior PDF at time 1 is a standard Gaussian PDF, and an Adam optimizer is used with a one-cycle learning rate schedule. For pre-training tasks and class-incremental task sequences, categorical cross entropy is used, while for domain-incremental task sequences, binary cross entropy is used. For pre-training on BCN-12, which has severe class imbalance, a weighted cross entropy  \(-\sum_{i=1}^k\frac m{n_i}p_i\ln q_i\) is used, where \(p_i\) is the true label indicator and \(q_i\) is the predicted probability, \(n_i\) is the frequency of the \(i\)-th class and \(m=\min\{n_1,n_2,\ldots,n_k\}\).

The learning rate, batch size and number of epochs for all the training tasks are provided in Table \ref{tab:hp}. In EWC and SI, there is a regularization strength \(\lambda\). SI also has a damping factor \(\xi\). They are tuned based on the validation final average accuracy via grid search:
\begin{itemize}
    \item EWC: \(\lambda\in\{1,10,100,1000,10000\}\)
    \item SI: \(\lambda\in\{1,10,100,1000,10000\},\xi\in\{0.1,1.0,10\}\)
\end{itemize}

For SFSVI methods on DI Sinusoid and CI Split 2D Iris, the inducing points are randomly generated from a uniform distribution whose boundaries are determined by the minimum and maximum values of the training input data across all tasks. For image task sequences with pre-training, they are set to -1 and 6. For image task sequences without pre-training, they are set to 0 and 1.

The joint HMC-NUTS used in visualization experiments is run by using all the current and previous data for the log un-normalized posterior PDF and running 64 chains on 64 CPU cores, as opposed to other methods which are run on a GPU. The inverse mass matrix and step size are determined by using a warmup algorithm called window adaptation, which is provided by BlackJAX, and the chains are run for 100 steps after the warmup phase. Only the final sample of size 64 is taken as the posterior sample.

\ack

\printbibliography

\bio

\end{document}